\declaretheorem[name=Theorem]{theorem}
\declaretheorem[name=Lemma,numberlike=theorem]{lemma}
\declaretheorem[name=Definition]{definition}
\newcommand{\R}{\mathbb{R}\xspace}
\newcommand{\X}{\mathcal{X}\xspace}
\icmltitlerunning{Single-Solution Hypervolume for Improving Generalization of
NNs}
\begin{document}

\twocolumn[
\icmltitle{Single-Solution Hypervolume Maximization and its use \\
  for Improving Generalization of Neural Networks}

\icmlauthor{Conrado S. Miranda}{contact@conradomiranda.com}
\icmlauthor{Fernando J. Von Zuben}{vonzuben@dca.fee.unicamp.br}
\icmladdress{University of Campinas, Brazil}

\icmlkeywords{hypervolume, multi-objective optimization}

\vskip 0.3in
]

\begin{abstract}
This paper introduces the hypervolume maximization with a single solution as an
alternative to the mean loss minimization.
The relationship between the two problems is proved through bounds on the cost
function when an optimal solution to one of the problems is evaluated on the
other, with a hyperparameter to control the similarity between the two problems.
This same hyperparameter allows higher weight to be placed on samples with
higher loss when computing the hypervolume's gradient, whose normalized version
can range from the mean loss to the max loss.
An experiment on MNIST with a neural network is used to validate the theory
developed, showing that the hypervolume maximization can behave similarly to the
mean loss minimization and can also provide better performance, resulting on a
20\% reduction of the classification error on the test set.
\end{abstract}

\section{Introduction}
\label{sec:introduction}
Many machine learning algorithms, including neural networks, can be divided into
three parts: the model, which is used to describe or approximate the structure
present in the training data set; the loss function, which defines how well an
instance of the model fits the samples; and the optimization method, which
adjusts the model's parameters to improve the error expressed by the loss
function. Obviously these three parts are related, and the generalization
capability of the obtained solution depends on the individual merit of each one
of the three parts, and also on their interplay.

Most of current research in machine learning focuses on creating new
models
\cite{bengio2009learning,Goodfellow-et-al-2016-Book,koller2009probabilistic},
for different applications and data types, and new optimization methods
\cite{bennett2006interplay,dauphin2015equilibrated,duchi2011adaptive,zeiler2012adadelta},
which may allow faster convergence, more robustness, and a better chance to
escape from poor local minima.

On the other hand, many cost functions come from statistical models
\cite{bishop2006pattern}, such as the quadratic error, cross-entropy or
variational bound \cite{kingma2013auto}, although some terms of the cost related
to regularization not necessarily have statistical basis
\cite{soft_svm,miyato2015distributional,rifai2011contractive}. When building the
total cost of a sample set, we frequently sum the costs for each sample plus
regularization terms for the whole dataset. Although this methodology is sound,
it can be problematic in real-world applications involving more complex models.

More specifically, if the learning problem is viewed from a multi-objective
optimization (MOO) perspective as minimization of the cost for each sample
individually, then not every efficient solution may be achieved by a convex
combination of the costs \cite{boyd2004convex} and Pareto-based solutions might
provide better results \cite{Freitas2004}. An alternative to minimizing the
convex combination is to maximize a metric known as the hypervolume
\cite{zitzler2007hypervolume}, which can be used to measure the quality of a set
of samples. As MOO algorithms usually search for many solutions with different
trade-offs of the objectives at the same time \cite{deb2014multi}, which can be
used in an ensemble for instance \cite{Chandra2006}, this ability to evaluate
the whole set of solutions instead of a single one made this metric widely used
in MOO \cite{hypervolume_methods}.

The computation of the hypervolume is NP-complete \cite{beume2009complexity},
making it hard to be used when there are many objectives and candidate
solutions. Nonetheless, in the particular case that a single candidate solution
is being used, it can be computed in linear time with the number of objectives,
which makes its computing time equal to the one associated with a convex
combination.

Under the MOO perspective of having a single objective function per sample, in
this paper we develop a theory linking the maximization of the single-solution
hypervolume to the minimization of the mean loss, in which the average of the
cost over the training samples is considered. We provide theoretical bounds on
the hypervolume value in the neighborhood of an optimal solution to the mean
loss and vice-versa, showing that these bounds can be made arbitrarily small
such that, in the limit, the optimal value for one problem is also optimal for
the other.

Moreover, we analyze the gradient of the hypervolume, showing that it
places more importance to samples with higher cost.
Since gradient optimization is an iterative process, the hypervolume
maximization implies an automatic reweighing of the samples at each iteration.
This reweighing allows the hypervolume gradient to range from the maximum loss'
to the mean loss' gradient by changing a hyperparameter.
It is also different from optimizing a linear combination of the mean and
maximum losses, as it also considers the losses of intermediary samples.

We conjecture that the gradient obtained from the hypervolume guides to improved
models, as it is able to express a compromise between fitting well the average
sample and the worst samples.
The automatic reweighing prevents the learning algorithm from pursuing a quick
reduction in the mean loss if it requires a significant increase in the loss on
already badly represented samples.

We perform an experiment to provide both empirical evidence for the conjecture,
showing that using the hypervolume maximization reduces classification error
when compared to the mean loss minimization, and validation for the theory
developed.

This paper is organized as follows. Section~\ref{sec:moo} provides an overview
of multi-objective optimization, properly charactering the maximization of the
hypervolume as a performance indicator for the learning system.
Section~\ref{sec:theory} presents the theoretical developments of the paper and
Section~\ref{sec:experiment} describes the experiment performed to validate the
theory and provides evidence for conjectures developed in the paper. Finally,
Section~\ref{sec:conclusion} outlines concluding remarks and future research
directions.
\section{Multi-objective optimization}
\label{sec:moo}
Multi-objective optimization (MOO) is a generalization of the traditional
single-objective optimization, where the problem is composed of multiple
objective functions $f_i \colon \X \to \mathcal \R,i \in [N]$, where $[N] =
\{1,2,\ldots,N\}$ \cite{deb2014multi}. Using the standard notation for MOO, the
problem can be described by:
\begin{equation}
  \label{eq:moo}
  \min_{x \in \X} f_1(x),
  \ldots,
  \min_{x \in \X} f_N(x),
\end{equation}
where $\X$ is the decision space and includes all constraints of the
optimization.

If some of the objectives have the same minima, then the redundant objectives
can be ignored during optimization. However, if their minima are different, for
example $f_1(x) = x^2$ and $f_2(x) = (x-1)^2$, then there is not a single
optimal point, but a set of different trade-offs between the objectives. A
solution that establishes an optimal trade-off, so that it is impossible to
reduce one of the objectives without increasing another, is said to be
efficient. The set of efficient solutions is called the Pareto set and its
counterpart in the objective space is called the Pareto frontier.

\subsection{Linear combination}
\label{sec:moo:combination}
A common approach in optimization used to deal with multi-objective problems is
to combine the objectives linearly \cite{boyd2004convex,deb2014multi}, so that
the problem becomes
\begin{equation}
  \label{eq:linear_combination}
  \min_{x \in \X} \sum_{i=1}^N w_i f_i(x),
\end{equation}
where the weight $w_i \in \R^+$ represents the relative importance given to
objective $i \in [N]$.

This approach is frequently found in learning with regularization
\cite{bishop2006pattern}, where one objective is to decrease the loss on the
training set and another is to decrease the model complexity, and the multiple
objectives are combined with weights for the regularization terms to balance the
trade-off. Examples of this technique include soft-margin support vector
machines \cite{soft_svm}, semi-supervised models \cite{rasmus2015semi}, and
adversarial examples \cite{miyato2015distributional}, among many others.

Although the optimal solution of the linearly combined problem is guaranteed to
be efficient, it is only possible to achieve any efficient solution when the
objectives are convex \cite{boyd2004convex}. This means that some desired
solutions may not be achievable by performing a linear combination of the
objectives and Pareto-based approaches should be used \cite{Freitas2004}, which
led to the creation of the hypervolume indicator in MOO.

\subsection{Hypervolume indicator}
\label{sec:moo:hypervolume}
Since the linear combination of objectives is not going to work properly on
non-convex objectives, it is desirable to investigate other forms of
transforming the multi-objective problem into a single-objective one, which
allows the standard optimization tools to be used.

One common approach in the multi-objective literature is to resort to the
hypervolume indicator \cite{zitzler2007hypervolume}, given by
\begin{equation}
  \mathcal H (z, X) = \int_{\mathcal Y}
  1[\exists x \in X \colon f(x) \prec y \prec z] \mathrm{d}y,
\end{equation}
where $z \in \R^N$ is the reference point, $X \subseteq \X$, $f(\cdot)$ is
the vector obtained by stacking the objectives, $\prec$ is the dominance
operator \cite{deb2014multi}, which is similar to the $<$ comparator and can be
defined as $x \prec y \Leftrightarrow (x_1 < y_1) \wedge \ldots \wedge (x_N <
y_N)$, and $1[\cdot]$ is the indicator operator. The problem then becomes
maximizing the hypervolume over the domain, and this optimization is able to
achieve a larger number of efficient points, without requiring convexity of the
Pareto frontier \cite{auger2009theory}.

Although the hypervolume is frequently used to analyze a set of candidate
solutions \cite{zitzler2003performance} and led to state-of-the-art algorithms
for MOO \cite{hypervolume_methods}, it can be expensive to compute as it is
NP-complete \cite{beume2009complexity}. However, for a single solution, that is,
if $|X|=1$, it can be computed in linear time and its logarithm can be written
as:
\begin{equation}
  \label{eq:log_hypervolume}
  \log \mathcal H(z, \{x\}) = \sum_{i=1}^N \log(z_i - f_i(x)),
\end{equation}
given that $f_i(x) < z_i$.

Among the many properties of the hypervolume, two must be highlighted in this
paper. First, the hypervolume is monotonic in the objectives, which means that
any reduction of any objective causes the hypervolume to increase, which in turn
is aligned with the loss minimization. The maximum of the single-solution
hypervolume is a point in the Pareto set, which means that the solution is
efficient.

The second property is that, like the linear combination, it also maintains some
shape information from the objectives. If the objectives are convex, then their
linear combination is convex and the hypervolume is concave, since $-f_i(x)$ is
concave and the logarithm of a concave function is concave.

\subsection{Loss minimization}
A common objective in machine learning is the minimization of some loss function
$l \colon D \times \Theta \to \R$ over a given data set $S = \{s_1,\ldots,s_N\}
\subset D$. Note that this notation includes both supervised and unsupervised
learning, as the space $D$ can include both the samples and their targets. For
simplicity, let $l_i(\theta) \coloneqq l(s_i, \theta)$, so that the specific
data set does not have to be considered.

Defining $f_i \coloneqq l_i$ and $\X \coloneqq \Theta$, the loss minimization
can be written as Eq.~\eqref{eq:moo}. Just like in other areas of optimization,
the usual approach to solve these problems in machine learning is the use of a
linear combination of the objectives, as discussed in
Sec.~\ref{sec:moo:combination}. However, as also discussed in
Sec.~\ref{sec:moo:combination}, this approach limits the number of solutions
that can be obtained if the losses are not convex, which motivates the use of
the hypervolume indicator.

Since the objectives differ only
in the samples used for the loss function and considering that all samples have
equal importance\footnote{This is the same motivation for using the uniform mean
loss. If prior importance is available, it can be used to define the value of
$z$, like it would be used to define $w$ in the weighted mean loss.}, the Nadir
point $z$ can have the same value for all objectives so that the solution found
is balanced in the losses. This value is given by the parameter $\mu$, so that
$z_i = \mu, \forall i \in [N]$. Then the problem becomes maximizing $\log
\mathcal H(\mu 1_N, \{\theta\})$ in relation to $\theta$, where $1_N$ is a
vector of ones with size $N$ and $\log \mathcal H(\cdot)$ is defined in
Eq.~\eqref{eq:log_hypervolume}.
\section{Theory of the single-solution hypervolume}
\label{sec:theory}

In this section, we develop the theory linking the single-solution hypervolume
maximization to the minimization of the mean loss, which is
a common optimization objective.
First, we define the requirements that a loss function must satisfy and describe
the two optimization problems in Sec.~\ref{sec:theory:definitions}.
Then, given an optimal solution to one problem, we will show in
Sec.~\ref{sec:theory:connection} bounds on the loss of optimality of the other
problem in the neighborhood of the given solution and will show that these
bounds can be made arbitrarily small by changing the reference point.
Finally in Sec.~\ref{sec:theory:gradient}, we will show how to transform the
gradient of the hypervolume to a format similar to a convex combination of the
gradients of each loss, which will be used in the experiments of
Sec.~\ref{sec:experiment} to show the advantage of using the hypervolume
maximization.

\subsection{Definitions}
\label{sec:theory:definitions}
In order to elaborate the theory, we must define some terms that will be used on
the results.

\begin{definition}[Loss Function]
  Let $\Theta$ be an open subset of $\mathbb R^n$. Let $l \colon \Theta \to
  \mathbb R$ be a continuously differentiable function. Then $l$ is a loss
  function if the following conditions hold:
  \begin{itemize}
    \item The loss is bounded everywhere, that is, $|l(\theta)| < \infty$ for
      all $\theta \in \Theta$;
    \item The gradient is bounded everywhere, that is, $\|\nabla l(\theta)\| <
      \infty$ for all $\theta \in \Theta$.
  \end{itemize}
\end{definition}

The openness of $\Theta$ simplifies the theory as we do not have to worry about
optima on the border, which are harder to deal with during proofs. However, the
theory can be adjusted so that $\Theta$ can be closed and points on the border
are allowed to have infinite loss.

\begin{definition}[Mean loss problem]
  Let $\Theta$ be an open subset of $\mathbb R^n$.
  Let $L = \{l_1, \ldots, l_N\}$ be a set of loss functions defined over
  $\Theta$.
  Then the mean loss $J_m \colon \Theta \to \mathbb R$ and its associated
  minimization problem are defined as
  \begin{equation}
    \label{eq:cost_mean}
    \min_{\theta \in \Theta} J_m(\theta), \quad
    J_m(\theta) = \frac{1}{N} \sum_{i=1}^N l_i(\theta).
  \end{equation}
\end{definition}

\begin{definition}[Hypervolume problem]
  Let $\Theta$ be an open subset of $\mathbb R^n$.
  Let $L = \{l_1, \ldots, l_N\}$ be a set of loss functions defined over
  $\Theta$.
  Let $\mu$ be given such that there exists some $\theta \in \Theta$ that
  satisfies $\mu > l(\theta)$ for all $l \in L$. Let $\Theta' = \{\theta \mid
  \theta \in \Theta, \mu > l(\theta), \forall l \in L\}$. Then the hypervolume
  $H \colon \mathbb R \times \Theta' \to \mathbb R$
  and its associated maximization problem are defined as
  \begin{equation}
    \label{eq:cost_H}
    \max_{\theta \in \Theta'} H(\mu, \theta), \quad
    H(\mu,\theta) = \sum_{i=1}^N \log(\mu - l_i(\theta)).
  \end{equation}
\end{definition}

Note that the hypervolume defined here is a simplification of the function
defined in Eq.~\eqref{eq:log_hypervolume}, so that $H(\mu, \theta) \coloneqq
\log \mathcal H(\mu 1_N, \{\theta\})$.

\subsection{Connection between $J_m(\theta)$ and $H(\mu, \theta)$}
\label{sec:theory:connection}
Using the definitions in the last section, we can now state bounds when applying
the optimal solution of a problem to the other.
The proofs are not present in this section in order to avoid cluttering, but
are provided in Appendix~\ref{sec:proofs}.

\begin{restatable}{theorem}{MeanToHMain}
  \label{theorem:mean_to_h_main}
  Let $\Theta$ be an open subset of $\mathbb R^n$.
  Let $L = \{l_1, \ldots, l_N\}$ be a set of loss functions defined over
  $\Theta$.
  Let $\theta^* \in \Theta$ be a local minimum of $J_m(\theta)$ and let
  $\epsilon > 0$ such that $\theta^* + \delta \in \Theta$ for all $\|\delta\|
  \le \epsilon$.
  Let $C_1$, $C_2$ and $C_3$ be such that $C_1 \le l_i(\theta^* + \delta) \le
  C_2$ and $\|\nabla l_i(\theta^* + \delta)\| \le C_3$ for all $i \in [N]$ and
  $\|\delta\| \le \epsilon$.
  Let $\nu > 0$.
  Then there is some $\epsilon' \in (0, \epsilon]$ such that
  \begin{equation}
    \label{eq:mean_to_h_main:bound}
    H(\mu, \theta^* + \delta) \le
    H(\mu, \theta^*) + \frac{\nu C_3 \epsilon' N}{\mu - C_2},
  \end{equation}
  for all $\|\delta\| \le \epsilon'$ and $\mu > \gamma$, where
  \begin{equation}
    \gamma = \max\left\{
      C_2,
      \frac{(1+\nu) C_2 - C_1}{\nu},
      \frac{C_2 - (1-\nu) C_1}{\nu}
    \right\}.
  \end{equation}
\end{restatable}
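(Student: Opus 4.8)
The plan is to control the change in the hypervolume by a first-order quantity and then use the local minimality of $\theta^*$ to show that this quantity is small once $\mu$ is large. Write $\Delta_i \coloneqq l_i(\theta^*+\delta) - l_i(\theta^*)$ and $w_i \coloneqq 1/(\mu - l_i(\theta^*))$. Since $\mu > C_2$ keeps $\mu - l_i(\theta) > 0$ on the entire $\epsilon$-ball, both $\theta^*$ and $\theta^*+\delta$ lie in $\Theta'$ and all logarithms are defined. The concavity of $\log$ gives the global tangent-line inequality $\log b \le \log a + (b-a)/a$ for all $a,b > 0$; applying it termwise with $a = \mu - l_i(\theta^*)$ and $b = \mu - l_i(\theta^*+\delta)$, and summing over $i$, yields
\begin{equation}
  H(\mu,\theta^*+\delta) - H(\mu,\theta^*) \le -\sum_{i=1}^N w_i \Delta_i,
\end{equation}
so it suffices to bound the weighted sum on the right from above. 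This step needs no smallness of $\delta$, since the tangent bound is valid globally.

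The smallness of $\delta$ enters only through local minimality: by definition there is a radius within which $J_m(\theta^*+\delta) \ge J_m(\theta^*)$, i.e.\ $\sum_i \Delta_i \ge 0$, and I would take $\epsilon' \in (0,\epsilon]$ no larger than this radius, so the inequality holds for every $\|\delta\| \le \epsilon'$. This choice of $\epsilon'$ is independent of $\mu$ and $\delta$, as the statement requires.

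It remains to cope with the non-uniformity of the weights. The loss bounds give $\mu - C_2 \le \mu - l_i(\theta^*) \le \mu - C_1$, hence $w_i \in [\,1/(\mu-C_1),\, 1/(\mu-C_2)\,]$. Splitting the indices by the sign of $\Delta_i$, estimating each $w_i$ by whichever endpoint of this interval maximizes the bound, and invoking $\sum_{\Delta_i \ge 0}\Delta_i \ge \sum_{\Delta_i < 0}|\Delta_i|$ from the previous step, the weighted sum collapses to at most $\bigl(1/(\mu-C_2) - 1/(\mu-C_1)\bigr)\sum_{\Delta_i<0}|\Delta_i|$. The mean value theorem together with the gradient bound gives $|\Delta_i| = |\nabla l_i(\xi_i)^\top \delta| \le C_3 \|\delta\| \le C_3\epsilon'$, so the sum is at most $NC_3\epsilon'$ and the residual is at most $NC_3\epsilon'(C_2-C_1)/\bigl[(\mu-C_2)(\mu-C_1)\bigr]$. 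A short computation shows $\mu > \gamma$ forces $(C_2-C_1)/(\mu-C_1) \le \nu$ (the third entry of $\gamma$; the first entry secures positivity and the second is merely a stronger margin), which converts this residual into the claimed bound $\nu C_3 \epsilon' N/(\mu-C_2)$.

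The main obstacle is the mismatch between what local minimality controls and what the gradient of $H$ weights: local minimality bounds only the uniformly weighted sum $\sum_i \Delta_i$, whereas the tangent inequality produces the $w_i$-weighted sum. Bridging this gap is exactly the purpose of the large-$\mu$ regime, which compresses all the $w_i$ into the narrow interval $[1/(\mu-C_1),1/(\mu-C_2)]$; quantifying how large $\mu$ must be for the weight spread to be absorbed into the tolerance $\nu$ is the technical core of the proof, and it is precisely what the threshold $\gamma$ records.
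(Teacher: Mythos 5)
Your proof is correct, and it takes a genuinely different route from the paper's. The paper applies the mean value theorem to $t \mapsto H(\mu,\theta^*+t\delta)$ to get an exact first-order expression $-\sum_i \beta_i(\Delta)\,\nabla l_i(\theta^*+\Delta)\cdot\delta$, then controls it with two auxiliary lemmas: one converting local minimality of $J_m$ into the gradient condition $\sum_i \nabla l_i(\theta^*+\Delta)\cdot\Delta \ge 0$ at intermediate points, and one showing the normalized weights satisfy $|N\alpha_i - 1|\le\nu$ when $\mu>\gamma$. You instead use the concavity of $\log$ (the tangent-line inequality) to get the one-sided bound $H(\mu,\theta^*+\delta)-H(\mu,\theta^*)\le -\sum_i w_i\Delta_i$ with weights frozen at $\theta^*$, and you only need the zeroth-order consequence of local minimality, $\sum_i\Delta_i\ge 0$; the weight-spread control is then done by splitting indices by the sign of $\Delta_i$ rather than via the $|N\alpha_i-1|\le\nu$ lemma. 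Your version is arguably cleaner: it sidesteps the paper's somewhat delicate Lemma~\ref{lemma:mean_to_h_helper} (whose choice $\kappa=\min_{\|\delta\|\le\epsilon'}c(\delta)$ is not obviously positive or attained), it makes the role of $\gamma$ transparent (you correctly observe that only the first and third entries are actually needed, the max over all three only strengthening the hypothesis), and your final algebra $\tfrac{C_2-(1-\nu)C_1}{\nu}\le\mu \Leftrightarrow \tfrac{C_2-C_1}{\mu-C_1}\le\nu$ lands exactly on the stated bound $\nu C_3\epsilon' N/(\mu-C_2)$. What the paper's MVT-based route buys in exchange is machinery (the normalized weights $\alpha_i$ and the two-sided bound on $N\alpha_i-1$) that is reused verbatim in the converse direction, Theorem~\ref{theorem:h_to_mean_main}, where a tangent-line bound on $\log$ points the wrong way and your trick would not transfer directly.
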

\begin{restatable}{theorem}{HToMeanMain}
  \label{theorem:h_to_mean_main}
  Let $\Theta$ be an open subset of $\mathbb R^n$.
  Let $L = \{l_1, \ldots, l_N\}$ be a set of loss functions defined over
  $\Theta$.
  Let $\theta^* \in \Theta$ be a local maximum of $H(\mu, \theta)$ for some
  $\mu$ and let $\epsilon > 0$ such that $\theta^* + \delta \in \Theta$ for all
  $\|\delta\| \le \epsilon$.
  Let $C_1$, $C_2$ and $C_3$ be such that $C_1 \le l_i(\theta^* + \delta) \le
  C_2$ and $\|\nabla l_i(\theta^* + \delta)\| \le C_3$ for all $i \in [N]$ and
  $\|\delta\| \le \epsilon$.
  Then there is some $\epsilon' \in (0, \epsilon]$ such that
  \begin{equation}
    \label{eq:h_to_mean_main:bound}
    J_m(\theta^* + \delta) \ge J_m(\theta^*) - \nu C_3 \epsilon'
  \end{equation}
  for all $\|\delta\| \le \epsilon'$, where
  \begin{equation}
    \nu = \max\left\{
      \frac{\mu - C_1}{\mu - C_2} - 1,
      1 - \frac{\mu - C_2}{\mu - C_1}
    \right\}.
  \end{equation}
\end{restatable}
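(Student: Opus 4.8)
The plan is to use the first-order optimality of $\theta^*$ for $H$ in \emph{integrated} form, which lets me avoid second-order remainder terms entirely, and then transfer the resulting inequality from the hypervolume to the mean loss by comparing the two cost differences term by term along the segment $\theta_t := \theta^* + t\delta$, $t\in[0,1]$. Before starting I note that $\nu$ is well defined and positive only when $\mu > C_2$, so that every weight $1/(\mu - l_i(\theta_t))$ is positive and lies in $[1/(\mu-C_1),\,1/(\mu-C_2)]$; I assume this throughout (it is also what keeps $H$ finite on the neighborhood). I first fix $\epsilon' \in (0,\epsilon]$ small enough that the closed ball $\{\theta^*+\delta : \|\delta\|\le\epsilon'\}$ lies both in $\Theta$ (guaranteed by $\epsilon$) and in the ball on which $\theta^*$ maximizes $H(\mu,\cdot)$; then each segment point $\theta_t$ lies in $\Theta'$ and the bounds $C_1,C_2,C_3$ apply to it.

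Fix $\delta$ with $\|\delta\|\le\epsilon'$ and write $m_{i,t} := \mu - l_i(\theta_t) \in [\mu-C_2,\mu-C_1]$ and $p_{i,t} := \nabla l_i(\theta_t)^{\!\top}\delta$, so that $|p_{i,t}| \le C_3\|\delta\|$ by Cauchy--Schwarz and the gradient bound. Since both cost functions are continuously differentiable on the segment, the fundamental theorem of calculus gives
\[
  H(\mu,\theta^*+\delta)-H(\mu,\theta^*) = -\int_0^1\sum_{i=1}^N \frac{p_{i,t}}{m_{i,t}}\,\mathrm{d}t,
  \qquad
  J_m(\theta^*+\delta)-J_m(\theta^*) = \frac1N\int_0^1\sum_{i=1}^N p_{i,t}\,\mathrm{d}t .
\]
Local maximality makes the left-hand side of the first identity nonpositive, so $\int_0^1\sum_i p_{i,t}/m_{i,t}\,\mathrm{d}t \ge 0$. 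This is the only place optimality enters, and crucially it is an honest inequality on function values rather than a statement about a vanishing gradient, so no $o(\|\delta\|)$ remainder ever appears. (Morally this is the integrated form of $\nabla_\theta H(\mu,\theta^*)=0$, which alone would give $\|\nabla J_m(\theta^*)\|\le \nu C_3$ but would leave second-order terms to control in the neighborhood.)

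To transfer the inequality I write $p_{i,t} = m_{i,t}\cdot(p_{i,t}/m_{i,t})$ and split $m_{i,t} = \bar m + (m_{i,t}-\bar m)$ against a fixed reference weight $\bar m$ at an endpoint of $[\mu-C_2,\mu-C_1]$. Integrating, the leading term $\bar m\int_0^1\sum_i p_{i,t}/m_{i,t}\,\mathrm{d}t$ is nonnegative (it is the product of $\bar m>0$ with the inequality just derived), hence may be dropped when lower-bounding; what remains is $\int_0^1\sum_i (m_{i,t}-\bar m)(p_{i,t}/m_{i,t})\,\mathrm{d}t$, which I bound below by $-\int_0^1\sum_i \frac{|m_{i,t}-\bar m|}{m_{i,t}}\,|p_{i,t}|\,\mathrm{d}t$. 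The factor $|m_{i,t}-\bar m|/m_{i,t}$ is exactly a relative weight deviation: with $\bar m=\mu-C_1$ and $m_{i,t}\ge\mu-C_2$ it is at most $(\mu-C_1)/(\mu-C_2)-1$, and with $\bar m=\mu-C_2$ and $m_{i,t}\le\mu-C_1$ it is at most $1-(\mu-C_2)/(\mu-C_1)$ — precisely the two arguments of the maximum defining $\nu$. Therefore $\int_0^1\sum_i p_{i,t}\,\mathrm{d}t \ge -\nu C_3\|\delta\|\,N$, and dividing by $N$ gives $J_m(\theta^*+\delta)-J_m(\theta^*)\ge -\nu C_3\|\delta\| \ge -\nu C_3\epsilon'$, as claimed.

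The main obstacle is the bookkeeping of the last paragraph: the sign of each $p_{i,t}$ is unknown, so the cancellation of the dominant contribution must be engineered through the reference weight $\bar m$ rather than by any monotonicity argument, and one must check that the \emph{same} $\bar m$ that annihilates the leading term also yields a clean deviation bound. Choosing $\bar m$ at an endpoint makes $m_{i,t}-\bar m$ single-signed and produces one of the two expressions; stating $\nu$ as their maximum keeps the estimate valid for either choice and mirrors the threshold $\gamma$ of Theorem~\ref{theorem:mean_to_h_main}, where the same two deviations reappear as the two nontrivial thresholds on $\mu$. A secondary point to verify is that $\epsilon'$ can be taken uniform over all directions $\delta$, which holds because local maximality is a statement on a full ball around $\theta^*$.
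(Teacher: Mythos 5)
Your proof is correct, and it takes a genuinely different route from the paper's. The paper proves the theorem via the mean value theorem applied twice: once to $H$ (Lemma~\ref{lemma:h_to_mean_helper2}, which converts local maximality of $H$ at $\theta^*$ into the pointwise inequality $\sum_i \nabla l_i(\theta^*+\Delta)\cdot\Delta \ge -\nu C_3\xi N$ at an intermediate point $\Delta=c(\delta)\delta$) and once to $J_m$ (to express $J_m(\theta^*+\delta)-J_m(\theta^*)$ through a gradient at another intermediate point); the two applications are then glued together, which forces the paper to introduce $\kappa=\min_{\|\delta\|\le\epsilon'}c(\delta)$ and shrink the radius to $\epsilon'\kappa$ so that the needed intermediate points are covered. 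You instead write both cost differences as integrals over the full segment $\theta^*+t\delta$, $t\in[0,1]$, so local maximality yields the single inequality $\int_0^1\sum_i p_{i,t}/m_{i,t}\,\mathrm{d}t\ge 0$ and no matching of intermediate points is required. The quantitative core is the same in both arguments --- the relative deviation of the weights $1/(\mu-l_i)$ from a constant reference is at most $\nu$, with the two endpoint choices producing exactly the two arguments of the $\max$ defining $\nu$ (your $|m_{i,t}-\bar m|/m_{i,t}$ plays the role of the paper's $|N\alpha_i-1|$) --- but your integrated version buys robustness: it sidesteps the delicate step in the paper where $\kappa$ must be positive and the map $\delta\mapsto c(\delta)\delta$ must cover a ball, neither of which is actually justified there. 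The paper's route, in exchange, isolates a reusable intermediate statement about $\sum_i\nabla l_i\cdot\Delta$ near $\theta^*$ that structurally mirrors Lemma~\ref{lemma:mean_to_h_helper} on the other side of the correspondence. One small point worth making explicit in your write-up is that $\mu>C_2$ (so that $\nu$ is finite and the weights are positive) is an implicit hypothesis of the theorem; you state it as an assumption, which is consistent with what the paper itself tacitly requires.
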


Note that $\nu$ in Eqs.~\eqref{eq:mean_to_h_main:bound} and
\eqref{eq:h_to_mean_main:bound} is multiplying the regular bounds due to the
continuous differentiability of the functions. If $\nu \ge 1$, then the
knowledge that a given $\theta^*$ is optimal in the other problem does not
provide any additional information. However, $\nu$ can be made arbitrarily small
by making $\mu$ large, so increasing $\mu$ allows more information to be shared
among the problems as their loss surfaces become closer.

One practical application of these theorems is that, given a value $\nu \in
(0,1)$ and a region $\Omega \subseteq \Theta$, we can check whether the
reference point $\mu$ is large enough for all $\theta \in \Omega$. If it is,
then optimizing $H(\mu,\theta)$ over $\Omega$ is similar to optimizing a bound
on $J_m(\theta)$ around the optimal solution and vice-versa, with the difference
between the bound and the real value vanishing as $\nu$ gets smaller and $\mu$
gets larger.

\subsection{Gradient of $H(\mu, \theta)$ in the limit}
\label{sec:theory:gradient}
The gradient of the hypervolume, as defined in Eq.~\eqref{eq:cost_H}, is given
by:
\begin{equation}
  \nabla_\theta H(\mu, \theta) = - \sum_{i=1}^N \frac{1}{\mu - l_i(\theta)}
  \nabla l_i(\theta).
\end{equation}
Note that using the hypervolume automatically places more importance on samples
with higher loss during optimization.

We conjecture that this automatic control of relevance is beneficial for
learning a function with better generalization, as the model will be forced to
focus more of its capacity on samples that it is not able to represent well.
Moreover, the hyperparameter $\mu$ provides some control over how much
difference of importance can be placed on the samples, as will be shown below.
This is similar to soft-margin support vector machines \cite{soft_svm}, where we
can change the regularization hyperparameter to control how much the margin can
be reduced in order to better fit misclassified samples. We provide empirical
evidence for this conjecture in Sec.~\ref{sec:experiment}.

For a given $\theta$, this gradient can change a lot by changing $\mu$, which
should be avoided during the optimization in real scenarios.
In order to stabilize the gradient and make it similar to the gradient of a
convex combination of the objective's gradients, we can use
\begin{equation}
  \label{eq:theory:gradient}
  \frac{\nabla_\theta H(\mu, \theta)}
  {\sum_{i=1}^N \frac{1}{\mu - l_i(\theta)}} =
  - \sum_{i=1}^N w_i \nabla l_i(\theta), \quad
  w_i = \frac{\frac{1}{\mu - l_i(\theta)}}{\sum_{j=1}^N \frac{1}{\mu -
  l_j(\theta)}},
\end{equation}
so that $w_i \ge 0$ for all $i \in [N]$ and $\sum_{i=1}^N w_i = 1$.

When the reference $\mu$ either becomes large or close to its allowed lower
bound, this normalized gradient presents interesting behaviors.

\begin{lemma}
  Let $\Theta$ be an open subset of $\mathbb R^n$.
  Let $L = \{l_1, \ldots, l_N\}$ be a set of loss functions defined over
  $\Theta$.
  Let $\theta \in \Theta$. Then
  \begin{equation}
    \lim_{\mu \to \infty}
    \frac{\nabla_\theta H(\mu, \theta)}
    {\sum_{i=1}^N \frac{1}{\mu - l_i(\theta)}}
    = -\nabla J_m(\theta).
  \end{equation}
\end{lemma}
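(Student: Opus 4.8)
The plan is to start from the identity already derived in Eq.~\eqref{eq:theory:gradient}, which expresses the normalized gradient as the convex combination $-\sum_{i=1}^N w_i \nabla l_i(\theta)$ with weights $w_i = \frac{1/(\mu - l_i(\theta))}{\sum_{j=1}^N 1/(\mu - l_j(\theta))}$. Since $\theta$ is fixed and each $l_i$ is a loss function (hence $|l_i(\theta)| < \infty$), for all $\mu$ sufficiently large we have $\mu > l_i(\theta)$ for every $i$, so $\theta \in \Theta'$ and the expression is well defined. It therefore suffices to compute the limit of each weight $w_i$ as $\mu \to \infty$ and then pass the limit through the finite sum.

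First I would factor $1/\mu$ out of each reciprocal, writing $\frac{1}{\mu - l_i(\theta)} = \frac{1}{\mu} \cdot \frac{1}{1 - l_i(\theta)/\mu}$. The common factor $1/\mu$ cancels between the numerator and denominator of $w_i$, leaving $w_i = \frac{1/(1 - l_i(\theta)/\mu)}{\sum_{j=1}^N 1/(1 - l_j(\theta)/\mu)}$. Because $l_i(\theta)$ is finite, $l_i(\theta)/\mu \to 0$ as $\mu \to \infty$, so each factor $1/(1 - l_i(\theta)/\mu) \to 1$ by continuity of $t \mapsto 1/(1-t)$ at $t=0$. The denominator is a sum of $N$ such factors, so it tends to $N$, and hence $w_i \to 1/N$ for every $i \in [N]$.

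Finally, since the sum runs over finitely many indices and each gradient $\nabla l_i(\theta)$ is a fixed, bounded vector independent of $\mu$, the limit of the convex combination equals the combination of the limits: $-\sum_{i=1}^N w_i \nabla l_i(\theta) \to -\frac{1}{N} \sum_{i=1}^N \nabla l_i(\theta)$. Recognizing the right-hand side as $-\nabla J_m(\theta)$ via Eq.~\eqref{eq:cost_mean} completes the argument.

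I expect no serious obstacle: the statement is a limit of a finite sum, and the only points requiring care are verifying that the normalized gradient is defined for large $\mu$ (which follows from boundedness of the losses) and justifying the interchange of the limit with the summation (immediate, since finite sums commute with limits and the gradients do not depend on $\mu$). The one subtlety worth flagging is that the convergence established here is pointwise in $\theta$, so the lemma does not by itself assert uniform convergence of the gradient field over $\Theta$.
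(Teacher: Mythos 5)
Your proof is correct and follows essentially the same route as the paper's: the paper defines the weights $\alpha_i$ (your $w_i$), asserts $\lim_{\mu\to\infty}\alpha_i = 1/N$, and passes the limit through the finite sum. You merely supply the short computation behind that asserted limit, which the paper omits.
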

\begin{proof}
  Let $\alpha_i \coloneqq \left(\frac{1}{\mu -
  l_i(\theta)}\right)/\left(\sum_{j=1}^N \frac{1}{\mu - l_j(\theta)}\right)$.
  Then $\lim_{\mu \to \infty} \alpha_i = \frac{1}{N}$, which gives the
  hypervolume limit as:
  \begin{subequations}
  \begin{align}
    &\textstyle
    \lim_{\mu \to \infty}
    \frac{\nabla_\theta H(\mu, \theta)}
    {\sum_{i=1}^N \frac{1}{\mu - l_i(\theta)}}
    \\
    &=\textstyle
    \lim_{\mu \to \infty}
    -\sum_{i=1}^N \alpha_i \nabla l_i(\theta)
    = - \nabla J_m(\theta).
  \end{align}
  \end{subequations}
\end{proof}
\begin{lemma}
  Let $\Theta$ be an open subset of $\mathbb R^n$.
  Let $L = \{l_1, \ldots, l_N\}$ be a set of loss functions defined over
  $\Theta$.
  Let $\theta \in \Theta$.
  Let $\Delta(\theta) \coloneqq \max_{i \in [N]} l_i(\theta)$ and
  $S = \{i \mid i \in [N], \Delta(\theta) = l_i(\theta)\}$. Then
  \begin{equation}
    \lim_{\mu \to \Delta(\theta)^+}
    \frac{\nabla_\theta H(\mu, \theta)}
    {\sum_{i=1}^N \frac{1}{\mu - l_i(\theta)}}
    = -\frac{1}{|S|} \sum_{i \in S} \nabla l_i(\theta)
  \end{equation}
\end{lemma}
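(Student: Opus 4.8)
The plan is to reuse the decomposition already recorded in Eq.~\eqref{eq:theory:gradient}, writing the normalized gradient as the weighted combination $-\sum_{i=1}^N w_i \nabla l_i(\theta)$ with $w_i = \frac{1/(\mu - l_i(\theta))}{\sum_{j=1}^N 1/(\mu - l_j(\theta))}$. For a fixed $\theta$, each gradient $\nabla l_i(\theta)$ is a constant vector (finite by the loss-function definition) that does not depend on $\mu$, so only the weights $w_i$ vary with $\mu$. Since the sum has finitely many terms, it suffices to compute $\lim_{\mu \to \Delta(\theta)^+} w_i$ for each $i$ and then pass the limit inside the finite sum by linearity.

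To evaluate these weight limits, I would substitute $\delta \coloneqq \mu - \Delta(\theta)$, so that $\mu \to \Delta(\theta)^+$ is the same as $\delta \to 0^+$, and partition $[N]$ into $S$ and its complement. For $i \in S$ we have $\mu - l_i(\theta) = \delta$, while for $i \notin S$ we have $\mu - l_i(\theta) = \delta + c_i$ with $c_i \coloneqq \Delta(\theta) - l_i(\theta) > 0$ strictly positive. Multiplying both the numerator and the denominator of $w_i$ by $\delta$ converts the divergent $1/\delta$ contributions into finite quantities: the rescaled denominator becomes $|S| + \sum_{i \notin S} \delta/(\delta + c_i)$, and the rescaled numerator is $1$ for $i \in S$ and $\delta/(\delta + c_i)$ for $i \notin S$.

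Taking $\delta \to 0^+$, every term $\delta/(\delta + c_i)$ with $i \notin S$ vanishes because $c_i > 0$, so the rescaled denominator tends to $|S|$. Hence $w_i \to 1/|S|$ for $i \in S$ and $w_i \to 0$ for $i \notin S$. Substituting these limits back into $-\sum_{i=1}^N w_i \nabla l_i(\theta)$ gives the claimed value $-\frac{1}{|S|}\sum_{i \in S}\nabla l_i(\theta)$. I do not expect a genuine obstacle here; the only points that require care are keeping the denominators for $i \notin S$ bounded away from the $1/\delta$ blow-up (guaranteed by the strict inequality $c_i > 0$ for non-maximal indices) and justifying the exchange of limit and sum, which is immediate since the sum is finite and the gradients are fixed finite vectors.
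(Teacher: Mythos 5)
Your proposal is correct and follows essentially the same route as the paper: both write the normalized gradient as $-\sum_i w_i \nabla l_i(\theta)$ and reduce the claim to computing $\lim_{\mu \to \Delta(\theta)^+} w_i = 1[i \in S]/|S|$, after which the limit passes through the finite sum. The only difference is that you actually justify that weight limit (via the substitution $\delta = \mu - \Delta(\theta)$ and rescaling by $\delta$), whereas the paper simply asserts it.
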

\begin{proof}
  Let $\alpha_i \coloneqq \left(\frac{1}{\mu -
  l_i(\theta)}\right)/\left(\sum_{j=1}^N \frac{1}{\mu - l_j(\theta)}\right)$.
  Then $\lim_{\mu \to \Delta(\theta)^+} \alpha_i = 1[i \in S]/|S|$, which gives
  the hypervolume limit as:
  \begin{subequations}
  \begin{align}
    &\textstyle
    \lim_{\mu \to \Delta(\theta)^+}
    \frac{\nabla_\theta H(\mu, \theta)}
    {\sum_{i=1}^N \frac{1}{\mu - l_i(\theta)}}
    \\
    &=\textstyle
    \lim_{\mu \to \Delta(\theta)^+}
    -\sum_{i=1}^N \alpha_i \nabla l_i(\theta)
    = -\frac{1}{|S|} \sum_{i \in S} \nabla l_i(\theta).
  \end{align}
  \end{subequations}
\end{proof}

As shown in Sec.~\ref{sec:theory:connection}, the mean loss and hypervolume
problems become closer as $\mu$ increases. In the limit, the normalized gradient
for the hypervolume becomes equal to the gradient of the mean loss. On the other
hand, when $\mu$ is close to its lower bound $\Delta(\theta)$, it becomes the
mean gradient of all the loss functions with maximum value. In particular, if
$|S|=1$, that is, only one loss has maximal value at some $\theta$, then the
normalized gradient for the hypervolume becomes equal to the gradient of the
maximum loss.
\begin{figure*}[t]
  \centering
  \begin{subfigure}{0.40\linewidth}
    \psfrag{it}[t][c]{\scriptsize Iteration}
    \psfrag{error}[c][c]{\scriptsize Classification error}
    \psfrag{inf}[l][l]{\tiny $\infty$}
    \psfrag{0}[l][l]{\scriptsize 0}
    \psfrag{--1}[l][l]{\scriptsize -1}
    \psfrag{--2}[l][l]{\scriptsize -2}
    \psfrag{--3}[l][l]{\scriptsize -3}
    \psfrag{--4}[l][l]{\scriptsize -4}
    \includegraphics[width=\linewidth]{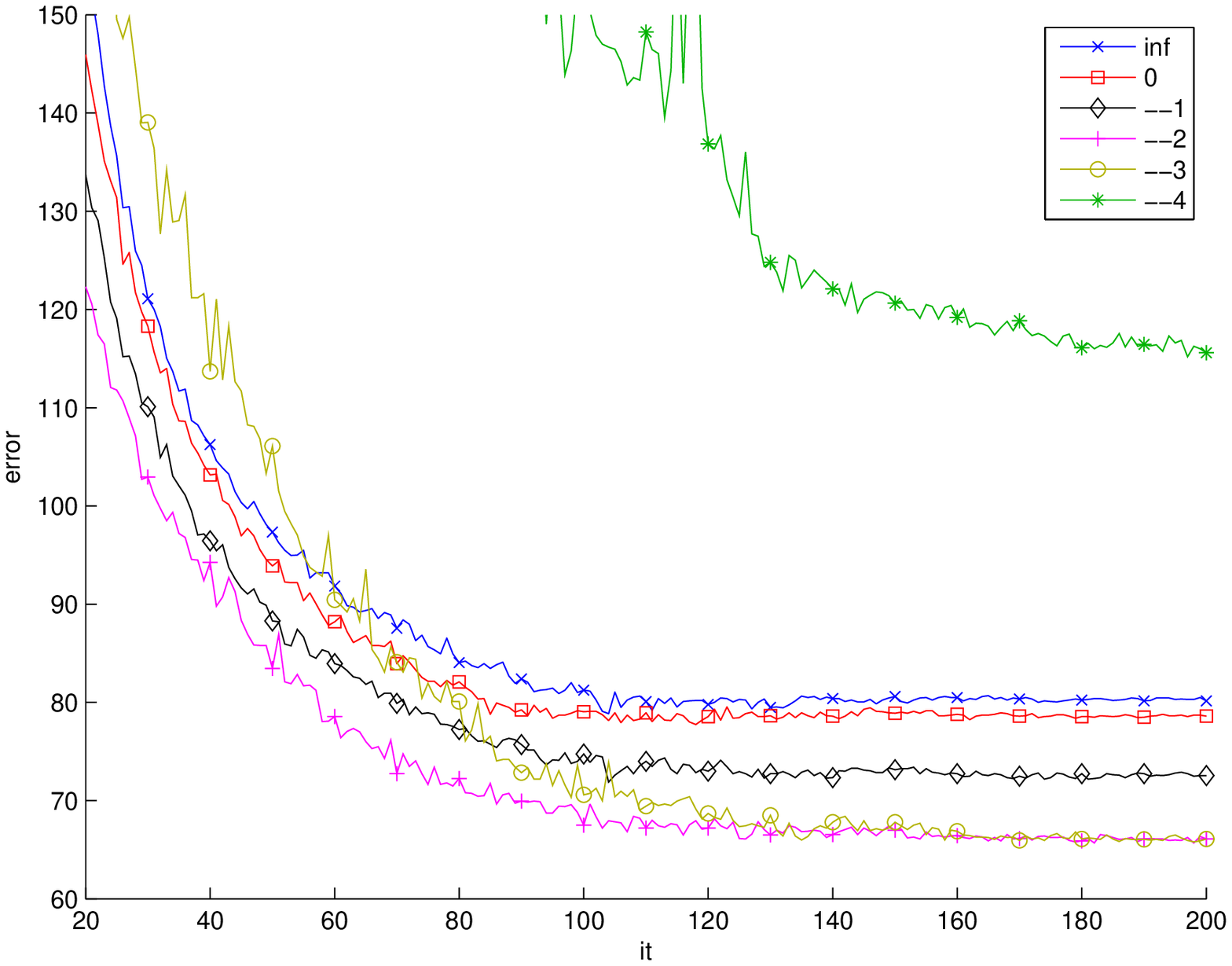}
    \caption{$\Xi = \{\xi_0\}$}
    \label{fig:results:00}
  \end{subfigure}
  \begin{subfigure}{0.40\linewidth}
    \psfrag{it}[t][c]{\scriptsize Iteration}
    \psfrag{error}[c][c]{\scriptsize Classification error}
    \psfrag{inf}[l][l]{\tiny $\infty$}
    \psfrag{0}[l][l]{\scriptsize 0}
    \psfrag{--1}[l][l]{\scriptsize -1}
    \psfrag{--2}[l][l]{\scriptsize -2}
    \psfrag{--3}[l][l]{\scriptsize -3}
    \psfrag{--4}[l][l]{\scriptsize -4}
    \includegraphics[width=\linewidth]{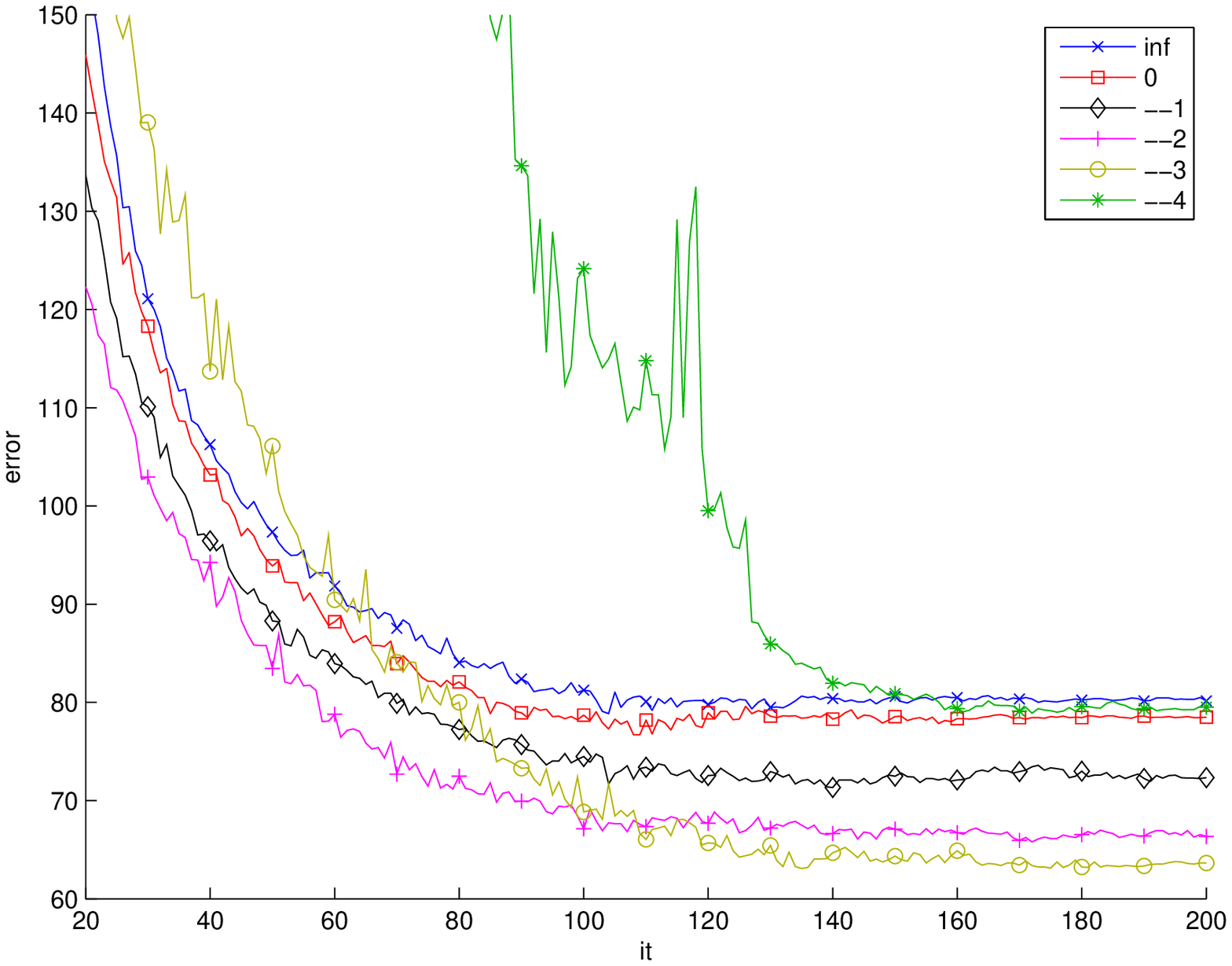}
    \caption{$\Xi = \{\xi_0, \infty\}$}
    \label{fig:results:01}
  \end{subfigure}
  \\
  \begin{subfigure}{0.40\linewidth}
    \psfrag{it}[t][c]{\scriptsize Iteration}
    \psfrag{error}[c][c]{\scriptsize Classification error}
    \psfrag{inf}[l][l]{\tiny $\infty$}
    \psfrag{0}[l][l]{\scriptsize 0}
    \psfrag{--1}[l][l]{\scriptsize -1}
    \psfrag{--2}[l][l]{\scriptsize -2}
    \psfrag{--3}[l][l]{\scriptsize -3}
    \psfrag{--4}[l][l]{\scriptsize -4}
    \includegraphics[width=\linewidth]{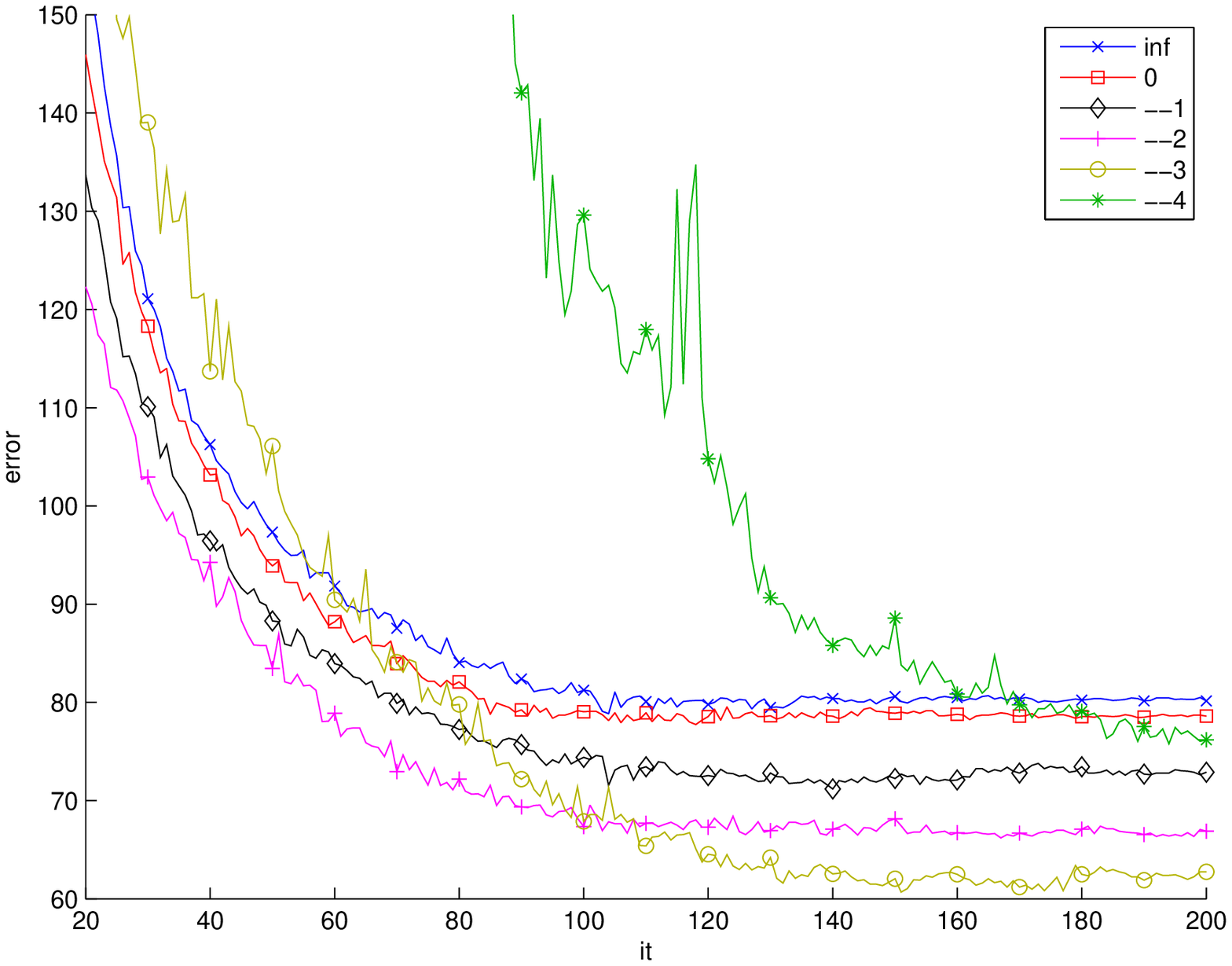}
    \caption{$\Xi = \{\xi_0,\ldots,0\}$}
    \label{fig:results:10}
  \end{subfigure}
  \begin{subfigure}{0.40\linewidth}
    \psfrag{it}[t][c]{\scriptsize Iteration}
    \psfrag{error}[c][c]{\scriptsize Classification error}
    \psfrag{inf}[l][l]{\tiny $\infty$}
    \psfrag{0}[l][l]{\scriptsize 0}
    \psfrag{--1}[l][l]{\scriptsize -1}
    \psfrag{--2}[l][l]{\scriptsize -2}
    \psfrag{--3}[l][l]{\scriptsize -3}
    \psfrag{--4}[l][l]{\scriptsize -4}
    \includegraphics[width=\linewidth]{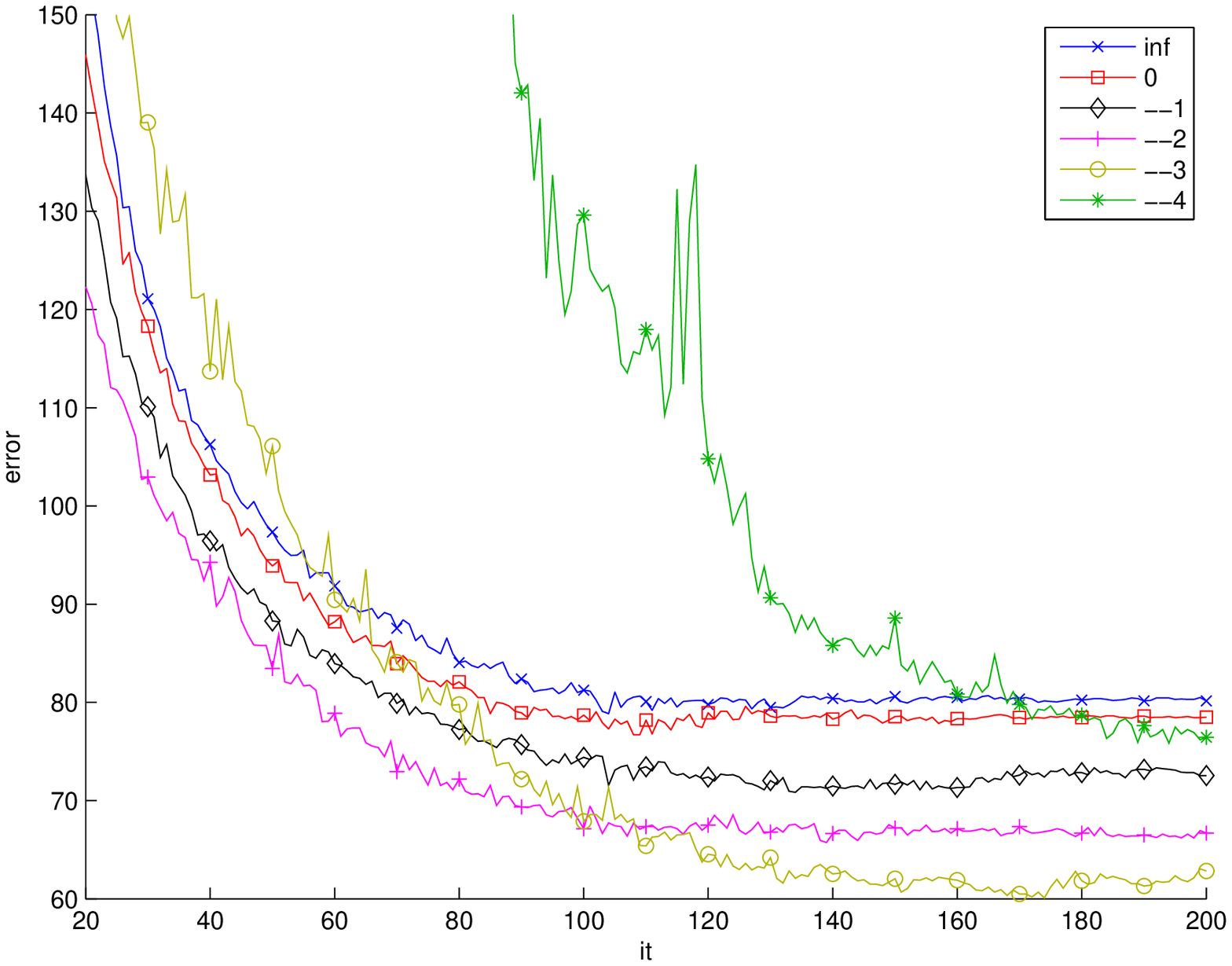}
    \caption{$\Xi = \{\xi_0,\ldots,0,\infty\}$}
    \label{fig:results:11}
  \end{subfigure}
  \caption{Mean number of misclassified samples in the test set over 20 runs for
   different initial values $\xi_0$ and training strategies, with $\xi =
   \infty$ representing the mean loss and $\Xi$ representing the schedule of
   values of $\xi$ used when no improvement is observed.}
  \label{fig:results}
\end{figure*}
\section{Experimental validation}
\label{sec:experiment}

To validate the use of the single-solution hypervolume instead of the mean loss
for training neural networks, we used a LeNet-like network on the MNIST dataset
\cite{lecun1998gradient}. This network is composed of three layers, all with
ReLU activation, where the first two layers are convolutions with 20 and 50
filters, respectively, of size 5x5, both followed by max-pooling of size 2x2,
while the last layer is fully connected and composed of 500 hidden units with
dropout probability of 0.5. The learning was performed by gradient descent with
base learning rate of 0.1 and momentum of 0.9, which were selected using the
validation set to provide the best performance for the mean loss optimization,
and minibatches of 500 samples. After 20 iterations without improvement on the
validation error, the learning rate is reduced by a factor of 0.1 until it
reaches the value of 0.001, after which it is kept constant until 200 iterations
occurred.

For the hypervolume, instead of fixing a single value for $\mu$, which would
require it to be large as the neural network has high loss when initialized, we
allowed $\mu$ to change as
\begin{equation}
  \mu = (1 + 10^\xi) \max_i l_i(\theta)
\end{equation}
so that it can follow the improvement on the loss functions, where $i$ are the
samples in the mini-batch and the parameters' gradients are not backpropagated
through $\mu$. Any value $\xi \in \mathbb R \cup \{\infty\}$ provides a valid
reference point and larger
values make the problem closer to using the mean loss. We tested $\xi \in \Xi =
\{-4, -3, -2, -1, 0, \infty\}$, where $\xi = \infty$ represents the mean loss,
and allowed for scheduling of $\xi$. In this case, before decreasing the
learning rate when the learning stalled, $\xi$ is incremented to the next value
in $\Xi$. We have also considered the possibility of $\infty \notin \Xi$, to
evaluate the effect of not using the mean together with the schedule.

Figure~\ref{fig:results} shows the results for each scenario considered. First,
note that using $\xi_0 = 0$ provided results close to the mean loss throughout
the iterations on all scenarios, which empirically validates the theory that
large values of $\mu$ makes maximization of the hypervolume similar to
minimization of the mean loss and provides evidence that $\mu$ does not have to
be so large in comparison to the loss functions for this to happen. Moreover,
Figs.~\ref{fig:results:10} and \ref{fig:results:11} are similar for all values
of $\xi_0$, which provides further evidence that $\xi=0$ is large enough to
approximate the mean loss well, as including $\xi = \infty$ in the schedule or
not does not change the performance.

On the other hand, $\xi_0 = -4$ was not able to provide good classification by
itself, requiring the use of other values of $\xi$ to achieve an error rate
similar to the mean loss. Although it is able to get better results with the
help of schedule, as shown in Figs.~\ref{fig:results:10} and
\ref{fig:results:11}, this probably is due to the other values of $\xi$
themselves instead of $\xi_0 = -4$ providing direct benefit, as it achieved an
error similar to the mean loss when no schedule except for $\xi = \infty$ was
used, as shown in Fig.~\ref{fig:results:01}. This indicates that too much
pressure on the samples with high loss is not beneficial, which is explained by
the optimization becoming closed to minimizing the maximum loss, as discussed in
Sec.~\ref{sec:theory:gradient}, thus ignoring most of the samples.

When optimizing the hypervolume starting from $\xi_0 \in \{-1, -2, -3\}$, all
scenarios showed improvements on the classification error, with all the
differences after convergence between optimizing the hypervolume and the mean
loss being statistically significant. Moreover, better results were obtained
when the schedule started from a smaller value of $\xi_0$. This provides
evidence to the conjecture in Sec.~\ref{sec:theory:gradient} that placing higher
pressure on samples with high loss, which is represented by higher values of
$w_i$ in Eq.~\eqref{eq:theory:gradient}, is beneficial and might help the
network to achieve higher generalization, but also warns that too much pressure
can be prejudicial as the results for $\xi_0 = -4$ show.

Furthermore, Fig.~\ref{fig:results} indicates that, even if this pressure is
kept throughout the training, it might improve the results compared to using
only the mean loss, but that reducing the pressure as the training progresses
improves the results. We suspect that reducing the pressure allows rare
samples that cannot be well learned by the model to be less relevant in favor of
more common samples, which improves the generalization overall, and that the
initial pressure allowed the model to learn better representations for the data,
as it was forced to consider more the bad samples. The presence of these rare
and bad samples also explain why $\xi_0 = -4$ provided bad results, as the
learning algorithm focused mainly on samples that cannot be appropriately learnt
by the model instead of focusing on the more representative ones.

\begin{table}[t]
  \centering
  \caption{Mean number of misclassified samples in the test set over 20 runs.
  The differences between $\xi_0 = \infty$ and $\xi_0 = -3$ are statistically
  significant ($p \ll 0.001$).}
  \label{tab:test_errors}
  \begin{tabular}{|c|c|c|c|c|}
    \hline
    $\xi_0$ & Schedule & Mean & Errors & Reduction
    \\
    \hline
    $\infty$ & & & 80.8 &
    \\
    $-3$ & X & X & 67.5 & 16.5\%
    \\
    $-3$ & X & \checkmark & 64.2 & 20.5\%
    \\
    $-3$ & \checkmark & X & 62.9 & 22.2\%
    \\
    $-3$ & \checkmark & \checkmark & 63.4 & 21.6\%
    \\
    \hline
  \end{tabular}
\end{table}

Table~\ref{tab:test_errors} provides the errors for the mean loss, represented
by $\xi_0 = \infty$, and for hypervolume with $\xi_0 = -3$, which presented the
best improvements. We used the classification error on the validation set to
select the parameters used for computing the classification error on the test
set. If not used alone, with either scheduling or mean or both, maximizing the
hypervolume leads to a reduction of at least $20\%$ in the classification error
without changing the convergence time significantly, as observed in
Fig.~\ref{fig:results}, which motivates its use in real problems.
\section{Conclusion}
\label{sec:conclusion}
In this paper, we introduced the idea of using the hypervolume with a single
solution as an optimization objective and presented a theory for its use. We
showed how an optimal solution for the hypervolume relates to the mean loss
problem, where we try to minimize the average of the losses for each sample, and
vice-versa, providing bounds on the neighborhood of the optimal point. We also
showed how the gradient of the hypervolume behaves when changing the reference
point and how to stabilize it for practical applications.

This analysis raised the conjecture that using the hypervolume in machine
learning might result in better models, as the hypervolume's gradient is
composed of an automatically weighted average of the gradient for each sample
with higher weights representing higher losses.
This weighting makes the learning algorithm focus more on samples that are not
well represented by the current set of parameters even if it means a slower
reduction of the mean loss.
Hence, it forces the learning algorithm to search for regions where all samples
can be well represented, avoiding early commitment to less promising regions.

Both the theory and the conjecture were validated in an experiment with MNIST,
where using the hypervolume maximization led to a reduction of 20\% in the
classification errors in comparison to the mean loss minimization.

Future research should focus on studying more theoretical and empirical
properties of the single-solution hypervolume maximization, to provide a solid
explanation for its improvement over the mean loss and in which scenarios this
could be expected. The robustness of the method should also be investigated, as
too much noise or the presence of outliers might cause large losses, which opens
the possibility of inducing the learner to place high importance on these losses
in detriment of more common cases.

\appendix
\section{Proofs}
\label{sec:proofs}
\subsection{Proof of Theorem~\ref{theorem:mean_to_h_main}}
\begin{lemma}
  \label{lemma:mean_to_h_helper}
  Let $\Theta$ be an open subset of $\mathbb R^n$.
  Let $L = \{l_1, \ldots, l_N\}$ be a set of loss functions defined over
  $\Theta$.
  Let $\theta^* \in \Theta$ be a local minimum of $J_m(\theta)$. Then there is
  some $\epsilon > 0$ such that, for all $\Delta$ with $\|\Delta\| \le
  \epsilon$, we have $\theta^* + \Delta \in \Theta$ and $\sum_{i=1}^N \nabla
  l_i(\theta^* + \Delta) \cdot \Delta \ge 0$.
\end{lemma}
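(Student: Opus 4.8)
The plan is to split the two conclusions. The membership $\theta^*+\Delta\in\Theta$ is immediate from the hypotheses: since $\Theta$ is open and $\theta^*\in\Theta$, I fix $\epsilon_0>0$ so that the closed ball $\{\theta^*+\Delta:\|\Delta\|\le\epsilon_0\}\subseteq\Theta$, and any $\epsilon\le\epsilon_0$ I eventually produce keeps the perturbed point inside $\Theta$. For the inequality I would first pass to the mean loss. Because $J_m=\frac1N\sum_i l_i$ is itself continuously differentiable with $\nabla J_m=\frac1N\sum_i\nabla l_i$, the quantity to be signed satisfies $\sum_{i=1}^N\nabla l_i(\theta^*+\Delta)\cdot\Delta = N\,\nabla J_m(\theta^*+\Delta)\cdot\Delta$, so it suffices to prove $\nabla J_m(\theta^*+\Delta)\cdot\Delta\ge0$ for all small $\Delta$. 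This is now a statement purely about the aggregate gradient field near the minimizer.

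The next ingredient is first-order optimality: since $\theta^*$ is a local minimum of the $C^1$ function $J_m$ and lies in the open set $\Theta$, I get $\nabla J_m(\theta^*)=0$. I then rewrite the target as a gradient increment,
\begin{equation}
  \nabla J_m(\theta^*+\Delta)\cdot\Delta
  = \bigl(\nabla J_m(\theta^*+\Delta)-\nabla J_m(\theta^*)\bigr)\cdot\Delta ,
\end{equation}
and aim to show this increment, paired with the displacement $\Delta$, is nonnegative. The clean sufficient condition is \emph{local monotonicity} of the gradient field, i.e.\ $(\nabla J_m(a)-\nabla J_m(b))\cdot(a-b)\ge0$ for $a,b$ in a small ball around $\theta^*$ (equivalently, local convexity of $J_m$ there): taking $a=\theta^*+\Delta$ and $b=\theta^*$ and using $\nabla J_m(\theta^*)=0$ yields exactly the claim. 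So the whole proof reduces to establishing monotonicity of $\nabla J_m$ on a neighborhood of $\theta^*$.

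The main obstacle — and it is genuinely the crux — is that $C^1$ smoothness together with the local-minimum property does \emph{not} by itself supply this pointwise radial positivity. The only facts forced by the hypotheses are $J_m(\theta^*+\Delta)\ge J_m(\theta^*)$ and the fundamental theorem of calculus / mean value theorem; but these sign only the \emph{average} $\int_0^1\nabla J_m(\theta^*+t\Delta)\cdot\Delta\,dt\ge0$, or the directional derivative at some interior point $\theta^*+c\Delta$ with $c\in(0,1)$ — not its value at the endpoint $\theta^*+\Delta$ that the lemma demands. Indeed, a $C^1$ loss assembled from nonnegative bumps accumulating at $\theta^*$ (each bump vanishing together with its derivative at both ends, heights and widths tuned so the derivative still tends to $0$) makes $\theta^*$ a global minimum yet produces $\nabla l_1(\theta^*+\Delta)\cdot\Delta<0$ for a sequence $\Delta\to0$ on the falling side of each bump, so no $\epsilon$ works. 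To obtain the statement exactly as worded I would therefore have to inject the missing second-order information: assume or derive that the aggregate loss is locally convex at $\theta^*$ — concretely, upgrade to $C^2$ with positive semidefinite Hessian on the ball, so that $\nabla J_m(\theta^*+\Delta)-\nabla J_m(\theta^*)=\bigl(\int_0^1\nabla^2 J_m(\theta^*+t\Delta)\,dt\bigr)\Delta$ pairs nonnegatively with $\Delta$ and the monotonicity step closes the pointwise inequality cleanly. Identifying the precise regularity hypothesis under which the endpoint inequality (rather than its averaged or interior-point surrogate) actually holds is the part I expect to consume all of the real work.
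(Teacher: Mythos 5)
Your diagnosis is correct, and the gap you isolate is not in your proposal but in the paper itself. The paper's proof of this lemma proceeds exactly along the route you rejected: it applies the mean value theorem to obtain $0 \le J_m(\theta^*+\delta)-J_m(\theta^*) = \frac{1}{N}\sum_{i=1}^N \nabla l_i(\theta^*+\Delta)\cdot\Delta/c(\delta)$ at the interior point $\Delta = c(\delta)\delta$ with $c(\delta)\in(0,1)$, and then tries to upgrade this to the asserted for-all-$\Delta$ statement by setting $\kappa = \min_{\|\delta\|\le\epsilon'} c(\delta)$ and $\epsilon = \epsilon'\kappa$. That step fails on two counts. First, $c(\delta)$ is neither unique nor, for any selection, continuous in $\delta$, so the minimum need not exist and the infimum can be $0$. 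Second, and decisively, even granting $\kappa>0$, the inequality has only been established at points of the form $c(\delta)\delta$ --- a set that need not contain every point of the ball of radius $\epsilon'\kappa$; shrinking the radius does not transport an inequality from the mean-value points to the whole ball. Your bump construction confirms that no bookkeeping can repair this: with $C^1$ bumps of height $h_k$ on intervals of width $w_k$ accumulating at $\theta^*$, $h_k/w_k\to 0$, one gets a bona fide loss function (bounded, bounded gradient) for which $\theta^*$ is a global minimum yet $\nabla l_1(\theta^*+\Delta)\cdot\Delta<0$ on the descending side of every bump, at $\Delta$ arbitrarily close to $0$; so the lemma's conclusion fails for every $\epsilon>0$, and your proposed strengthening (local convexity near $\theta^*$, e.g.\ $C^2$ with positive semidefinite Hessian, giving local monotonicity of $\nabla J_m$) is the natural minimal hypothesis under which the endpoint inequality actually holds.

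It is worth adding that the for-all form is not incidental to the paper: in the proof of Theorem~\ref{theorem:mean_to_h_main}, the lemma is invoked at a point $\Delta = c(\delta)\delta$ produced by a \emph{different} mean-value application (to $H$, not to $J_m$), so the true but weaker surrogate --- that for each $\delta$ \emph{some} interior point satisfies the inequality --- would not suffice there either. The defect you identified therefore propagates to the theorem unless the hypotheses are strengthened along the lines you indicate, or the downstream argument is rewritten to control the gradient field on the entire ball.
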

\begin{proof}
  Since $\Theta$ is open, there is some $\sigma > 0$ such that $\theta^* +
  \delta \in \Theta$ for all $\|\delta\| \le \sigma$.
  Since $\theta^*$ is a local minimum of $J_m(\theta)$, there is some $\epsilon'
  \in (0, \sigma]$ such that $J_m(\theta^* + \delta) \ge J_m(\theta^*)$ for all
  $\|\delta\| \le \epsilon'$. Given some $\delta \ne 0$, from the mean value
  theorem we have that
  \begin{equation}
    \label{eq:J_MVT}
    J_m(\theta^* + \delta) - J_m(\theta^*)
    = \nabla J_m(\theta^* + \Delta) \cdot \Delta / c(\delta)
  \end{equation}
  for some $c(\delta) \in (0,1)$, where $\Delta = c(\delta)\delta$.
  From the optimality, we have
  \begin{equation}
    0 \le J_m(\theta^* + \delta) - J_m(\theta^*)
    = \frac{1}{N} \sum_{i=1}^N \nabla l_i(\theta^* + \Delta) \cdot \Delta /
    c(\delta).
  \end{equation}
  If $\delta = 0$, then the equality is trivially satisfied.
  Let $\kappa = \min_{\|\delta\| \le \epsilon'} c(\delta)$. Then defining
  $\epsilon = \epsilon' \kappa$ completes the proof.
\end{proof}

\begin{lemma}
  \label{lemma:mean_to_h_helper2}
  Let $\Theta$ be an open subset of $\mathbb R^n$.
  Let $L = \{l_1, \ldots, l_N\}$ be a set of loss functions defined over
  $\Theta$.
  Let $\theta \in \Theta$ and $\epsilon > 0$ such that $\theta + \delta \in
  \Theta$ for all $\|\delta\| \le \epsilon$.
  Let $\nu > 0$, $\beta_i(\delta) \coloneqq \frac{1}{\mu - l_i(\theta +
  \delta)}$ and $\alpha_i(\delta) \coloneqq \frac{\beta_i(\delta)}{\sum_{j=1}^N
  \beta_j(\delta)}$.
  Let $C_1$ and $C_2$ be such that $C_1 \le l_i(\theta+\delta) \le C_2$ for all $i
  \in [N]$ and $\|\delta\| \le \epsilon$.
  Then
  \begin{equation}
    \label{eq:mean_to_h:nu}
    |N \alpha_i(\delta) - 1| \le \nu
  \end{equation}
  for all $i \in [N]$, $\|\delta\| \le \epsilon$ and $\mu > \gamma$, where
  $\gamma = \max\left\{ C_2, \frac{(1+\nu) C_2 - C_1}{\nu}, \frac{C_2 - (1-\nu)
  C_1}{\nu} \right\}$.
\end{lemma}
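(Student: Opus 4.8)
The plan is to show that each quantity $N\alpha_i(\delta)$ lies in an interval around $1$ whose width is controlled by $\mu$, and then translate the requirement that this width be at most $\nu$ into the three lower bounds on $\mu$ collected in $\gamma$. First I would observe that the condition $\mu > \gamma$ forces $\mu > C_2$, which guarantees $\mu - l_i(\theta + \delta) \ge \mu - C_2 > 0$ for every $i$ and every $\|\delta\| \le \epsilon$, so all the $\beta_i(\delta)$ are well-defined and strictly positive. Combining this with $C_1 \le l_i(\theta + \delta) \le C_2$ gives the two-sided bound $\frac{1}{\mu - C_1} \le \beta_i(\delta) \le \frac{1}{\mu - C_2}$, valid uniformly in $i$ and $\delta$.

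The key step is to rewrite $N\alpha_i(\delta) = \beta_i(\delta) / \bar\beta(\delta)$, where $\bar\beta(\delta) = \frac{1}{N}\sum_{j=1}^N \beta_j(\delta)$ is the average of the $\beta_j(\delta)$ and therefore also lies in $[\frac{1}{\mu - C_1}, \frac{1}{\mu - C_2}]$. Bounding the numerator above and the (strictly positive) denominator below then yields
\[
  \frac{\mu - C_2}{\mu - C_1} \le N\alpha_i(\delta) \le \frac{\mu - C_1}{\mu - C_2}.
\]
Subtracting $1$ and simplifying the differences gives the one-sided estimates $N\alpha_i(\delta) - 1 \le \frac{C_2 - C_1}{\mu - C_2}$ and $N\alpha_i(\delta) - 1 \ge -\frac{C_2 - C_1}{\mu - C_1}$.

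It remains to enforce that both one-sided bounds are dominated by $\nu$. Requiring $\frac{C_2 - C_1}{\mu - C_2} \le \nu$ rearranges to $\mu \ge \frac{(1+\nu)C_2 - C_1}{\nu}$, which is exactly the second entry of $\gamma$; requiring $\frac{C_2 - C_1}{\mu - C_1} \le \nu$ rearranges to $\mu \ge \frac{C_2 - (1-\nu)C_1}{\nu}$, the third entry. Since $\mu > \gamma$ exceeds all three listed quantities simultaneously, both inequalities hold, and hence $|N\alpha_i(\delta) - 1| \le \nu$ for every $i \in [N]$ and every $\|\delta\| \le \epsilon$, as claimed.

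I do not anticipate a serious obstacle here, as every estimate is a monotone manipulation of the affine bounds on $l_i$; the independence issue in bounding $\beta_i(\delta)/\bar\beta(\delta)$ is harmless because I bound numerator and denominator separately. The one point requiring care is verifying that the three entries of $\gamma$ are genuinely the right thresholds: the first merely secures positivity of the denominators, while the second and third are precisely the rearrangements of the upper and lower deviation bounds, so the maximum enforces all conditions at once. A useful sanity check is that $\mu - C_2 \le \mu - C_1$, making the upper deviation the larger of the two, consistent with the asymmetry between the second and third terms of $\gamma$.
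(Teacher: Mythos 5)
Your proposal is correct and follows essentially the same route as the paper's proof: bound each $\beta_i(\delta)$ between $\frac{1}{\mu - C_1}$ and $\frac{1}{\mu - C_2}$, bound the ratio $N\alpha_i(\delta)$ by $\frac{\mu - C_2}{\mu - C_1}$ and $\frac{\mu - C_1}{\mu - C_2}$, and rearrange the two deviation conditions into the second and third entries of $\gamma$, with the first entry $C_2$ securing positivity of the denominators. The only difference is presentational (you phrase the ratio via the average $\bar\beta$ where the paper uses $\max_{i,\delta}\beta_i$ and $\min_{i,\delta}\beta_i$), so no further comment is needed.
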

\begin{proof}
  For Eq.~\eqref{eq:mean_to_h:nu} to hold, we must have
  \begin{subequations}
  \label{eq:mean_to_h:condition}
  \begin{gather}
    \label{eq:mean_to_h:condition1}
    N \alpha_i(\delta) - 1 \le N \max_{i,\delta} \alpha_i(\delta) - 1 \le \nu
    \\
    \label{eq:mean_to_h:condition2}
    N \alpha_i(\delta) - 1 \ge N \min_{i,\delta} \alpha_i(\delta) - 1 \ge -\nu.
  \end{gather}
  \end{subequations}

  Using the bounds $C_1$ and $C_2$, we can bound $\beta_i(\delta)$ as:
  \begin{equation}
    \label{eq:mean_to_h:beta}
    \max_{i,\delta} \beta_i(\delta) \le \frac{1}{\mu - C_2}, \quad
    \min_{i,\delta} \beta_i(\delta) \ge \frac{1}{\mu - C_1}.
  \end{equation}

  Hence, we have that Eq.~\eqref{eq:mean_to_h:condition1} can be satisfied by:
  \begin{equation}
    \label{eq:mean_to_h:alpha_upper}
    \frac{\frac{1}{\mu - C_2}}{N \frac{1}{\mu - C_1}}
    \le \frac{1+\nu}{N}
    \Rightarrow
    \frac{(1+\nu) C_2 - C_1}{\nu} \le \mu
  \end{equation}
  and Eq.~\eqref{eq:mean_to_h:condition2} can be satisfied by:
  \begin{equation}
    \label{eq:mean_to_h:alpha_lower}
    \frac{\frac{1}{\mu - C_1}}{N \frac{1}{\mu - C_2}}
    \ge \frac{1-\nu}{N}
    \Rightarrow
    \frac{C_2 - (1-\nu) C_1}{\nu} \le \mu.
  \end{equation}
  The additional value in the definition of $\gamma$ guarantees that $\mu$
  does not become invalid.
\end{proof}

\begin{proof}[Proof of Theorem~\ref{theorem:mean_to_h_main}]
  From the mean value theorem, for any $\delta$ we have that
  \begin{subequations}
  \label{eq:H_MVT}
  \begin{align}
    &H(\mu, \theta^* + \delta) - H(\mu, \theta^*)
    \\
    &= - \sum_{i=1}^N \frac{1}{\mu - l_i(\theta^* + \Delta)}
    \nabla l_i(\theta^* + \Delta) \cdot \delta
  \end{align}
  \end{subequations}
  for some $c(\delta) \in (0,1)$, where $\Delta = c(\delta) \delta$.

  Let $\epsilon_1$ be the value defined in Lemma~\ref{lemma:mean_to_h_helper}
  and define $\epsilon' = \min \{\epsilon, \epsilon_1\}$. Then restricting
  $\|\delta\| \le \epsilon'$ implies that $\|\Delta \| \le \epsilon_1$ and that
  the results in Lemma~\ref{lemma:mean_to_h_helper} hold. Therefore
  $\sum_{i=1}^N \nabla l_i(\theta^* + \Delta) \cdot \Delta \ge 0$ for all
  $\|\delta\| \le \epsilon'$.

  Then, using Lemma~\ref{lemma:mean_to_h_helper2}, the difference between the
  hypervolumes can be bounded as:
  \begin{subequations}
  \begin{align}
    &\frac{H(\mu, \theta^* + \delta) - H(\mu, \theta^*)}
    {\sum_{j=1}^N \beta_j(\Delta)}
    = - \sum_{i=1}^N \alpha_i(\Delta)
    \nabla l_i(\theta^* + \Delta) \cdot \delta
    \\
    &\le
    - \frac{1}{N} \sum_{i=1}^N (N\alpha_i(\Delta) - 1)
    \nabla l_i(\theta^* + \Delta) \cdot \delta
    \\
    &\le
    \frac{1}{N} \sum_{i=1}^N
    |N\alpha_i(\Delta) - 1|
    \|\nabla l_i(\theta^* + \Delta)\|
    \|\delta\|
    \\
    &\le
    \nu C_3 \epsilon'.
  \end{align}
  \end{subequations}
  Using the fact that $\beta_i$ is upper bounded according to
  Eq.~\eqref{eq:mean_to_h:beta}, we achieve the final bound.
\end{proof}
\subsection{Proof of Theorem~\ref{theorem:h_to_mean_main}}
\begin{lemma}
  \label{lemma:h_to_mean_helper1}
  Let $\Theta$ be an open subset of $\mathbb R^n$.
  Let $L = \{l_1, \ldots, l_N\}$ be a set of loss functions defined over
  $\Theta$.
  Let $\theta \in \Theta$ and $\epsilon > 0$ such that $\theta + \delta \in
  \Theta$ for all $\|\delta\| \le \epsilon$.
  Let $\beta_i(\delta) \coloneqq \frac{1}{\mu - l_i(\theta + \delta)}$ and
  $\alpha_i(\delta) \coloneqq \frac{\beta_i(\delta)}{\sum_{j=1}^N
  \beta_j(\delta)}$.
  Let $C_1$ and $C_2$ be such that $C_1 \le l_i(\theta+\delta) \le C_2$ for all
  $i \in [N]$ and $\|\delta\| \le \epsilon$.
  Then
  \begin{equation}
    \label{eq:h_to_mean:nu}
    |N \alpha_i(\delta) - 1| \le \nu
  \end{equation}
  for all $i \in [N]$ and $\|\delta\| \le \epsilon$, where $\nu = \max\left\{
  \frac{\mu - C_1}{\mu - C_2} - 1, 1 - \frac{\mu - C_2}{\mu - C_1} \right\}$.
\end{lemma}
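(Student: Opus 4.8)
The plan is to mirror the computation in Lemma~\ref{lemma:mean_to_h_helper2}, but to run it in the opposite direction: there, $\nu$ was prescribed and a sufficient condition on $\mu$ was derived, whereas here $\mu$, $C_1$ and $C_2$ are fixed and I simply read off the resulting value of $\nu$ that bounds $|N\alpha_i(\delta) - 1|$ uniformly.

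First I would re-establish the elementary two-sided bound on $\beta_i(\delta)$. Since $C_1 \le l_i(\theta + \delta) \le C_2$ and $\mu > C_2$, we have $0 < \mu - C_2 \le \mu - l_i(\theta+\delta) \le \mu - C_1$, and taking reciprocals reverses the inequalities, giving
\[
  \frac{1}{\mu - C_1} \le \beta_i(\delta) \le \frac{1}{\mu - C_2}
\]
for every $i \in [N]$ and every $\|\delta\| \le \epsilon$. This is precisely Eq.~\eqref{eq:mean_to_h:beta}. Next I would bound $\alpha_i(\delta)$ by estimating its numerator and denominator separately with these worst cases. For the upper bound, replace $\beta_i$ in the numerator by its maximum and each of the $N$ terms of $\sum_{j} \beta_j$ by its minimum, so that
\[
  \alpha_i(\delta) \le
  \frac{\frac{1}{\mu - C_2}}{N \frac{1}{\mu - C_1}}
  = \frac{1}{N}\cdot\frac{\mu - C_1}{\mu - C_2},
\]
which rearranges to $N\alpha_i(\delta) - 1 \le \frac{\mu - C_1}{\mu - C_2} - 1$. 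Symmetrically, using the minimum of $\beta_i$ in the numerator and the maximum of each $\beta_j$ in the denominator gives $\alpha_i(\delta) \ge \frac{1}{N}\cdot\frac{\mu - C_2}{\mu - C_1}$, hence $1 - N\alpha_i(\delta) \le 1 - \frac{\mu - C_2}{\mu - C_1}$. Taking the larger of the two one-sided slacks then yields $|N\alpha_i(\delta) - 1| \le \nu$ with $\nu$ as stated in the lemma.

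I do not expect a genuine obstacle: the claim is a direct algebraic consequence of the extremal bounds on $\beta_i$, and the argument is essentially the inverse reading of Eqs.~\eqref{eq:mean_to_h:alpha_upper} and \eqref{eq:mean_to_h:alpha_lower}. The only points requiring care are (i) confirming $\mu > C_2$, so that all denominators are positive and the reciprocal step preserves the correct ordering, which holds whenever $H(\mu, \theta + \delta)$ is defined on the whole $\epsilon$-ball; and (ii) verifying that both expressions inside the $\max$ are nonnegative — which follows from $C_1 \le C_2 < \mu$, since then $\frac{\mu - C_1}{\mu - C_2} \ge 1 \ge \frac{\mu - C_2}{\mu - C_1}$ — so that their maximum legitimately dominates the absolute value from both sides. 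Unlike Lemma~\ref{lemma:mean_to_h_helper2}, no additional clause guarding the validity of $\mu$ is needed, because here $\mu$ is treated as given rather than being solved for.
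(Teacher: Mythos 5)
Your proof is correct and follows essentially the same route as the paper: the paper's own (very terse) proof likewise invokes the extremal bounds $\frac{1}{\mu - C_1} \le \beta_i(\delta) \le \frac{1}{\mu - C_2}$ from Eq.~\eqref{eq:mean_to_h:beta} and reads off that the two one-sided conditions of Eq.~\eqref{eq:mean_to_h:condition} hold precisely when $\nu$ dominates $\frac{\mu - C_1}{\mu - C_2} - 1$ and $1 - \frac{\mu - C_2}{\mu - C_1}$, which is exactly your computation spelled out in full. Your added remarks on $\mu > C_2$ and the nonnegativity of both terms in the $\max$ are correct and make explicit what the paper leaves implicit.
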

\begin{proof}
  For Eq.~\eqref{eq:h_to_mean:nu} to hold, we must satisfy the conditions in
  Eq.~\eqref{eq:mean_to_h:condition}.
  Using the bounds $C_1$ and $C_2$, we can bound $\beta_i(\delta)$ as in
  Eq.~\eqref{eq:mean_to_h:beta}. Hence, we have that
  Eq.~\eqref{eq:mean_to_h:alpha_upper}
  and Eq.~\eqref{eq:mean_to_h:alpha_lower} can be satisfied by:
  \begin{equation}
    \textstyle
    \frac{\mu - C_1}{\mu - C_2} - 1 \le \nu, \quad
    1 - \frac{\mu - C_2}{\mu - C_1} \le \nu.
  \end{equation}
\end{proof}

\begin{lemma}
  \label{lemma:h_to_mean_helper2}
  Let $\Theta$ be an open subset of $\mathbb R^n$.
  Let $L = \{l_1, \ldots, l_N\}$ be a set of loss functions defined over
  $\Theta$.
  Let $\theta^* \in \Theta$ be a local maximum of $H(\mu, \theta)$ for some
  $\mu$.
  Then there is some $\epsilon > 0$ such that, for all $\xi \in (0,\epsilon]$
  and $\Delta$ with $\|\Delta\| \le \xi$, we have $\theta^* + \Delta \in \Theta$
  and
  \begin{equation}
    \sum_{i=1}^N \nabla l_i(\theta^*+\Delta) \cdot \Delta \ge
    -\nu C_3 \xi N,
  \end{equation}
  where $C_1$, $C_2$ and $C_3$ are such that $C_1 \le l_i(\theta^* + \Delta) \le
  C_2$ and $\|\nabla l_i(\theta^* + \Delta)\| \le C_3$ for all $i \in [N]$ and
  $\|\Delta\| \le \xi$ and $\nu = \max\left\{ \frac{\mu - C_1}{\mu - C_2} - 1, 1
  - \frac{\mu - C_2}{\mu - C_1} \right\}$.
\end{lemma}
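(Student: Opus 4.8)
The plan is to mirror the proof of Lemma~\ref{lemma:mean_to_h_helper}, replacing the mean value theorem applied to $J_m$ by the one applied to $H$ that already appears in Eq.~\eqref{eq:H_MVT}. The ingredient that is absent in the mean-loss case is that differentiating $H$ produces the non-uniform weights $\beta_i$, so the first-order condition at the maximum controls a \emph{weighted} sum of the $\nabla l_i(\theta^*+\Delta)\cdot\delta$ rather than their plain average. The discrepancy between the weighted and uniform sums is exactly what Lemma~\ref{lemma:h_to_mean_helper1} bounds, and it is what turns the clean $\ge 0$ of the mean-loss helper into the slack term $-\nu C_3 \xi N$.

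First I would use the openness of $\Theta$ together with the local maximality of $H(\mu,\cdot)$ at $\theta^*$ to produce $\epsilon' > 0$ such that $\theta^* + \delta \in \Theta$ and $H(\mu, \theta^* + \delta) \le H(\mu, \theta^*)$ for every $\|\delta\| \le \epsilon'$. For such a $\delta$, the mean value theorem of Eq.~\eqref{eq:H_MVT} supplies an intermediate point $\Delta = c(\delta)\delta$ with $c(\delta) \in (0,1)$, and rewriting it in the notation of Lemma~\ref{lemma:h_to_mean_helper1} gives
\begin{equation*}
  H(\mu, \theta^* + \delta) - H(\mu, \theta^*)
  = -\Big(\textstyle\sum_{j=1}^N \beta_j(\Delta)\Big)
    \sum_{i=1}^N \alpha_i(\Delta)\, \nabla l_i(\theta^* + \Delta) \cdot \delta.
\end{equation*}
Since the left-hand side is $\le 0$ and $\sum_j \beta_j(\Delta) > 0$, dividing yields $\sum_{i=1}^N \alpha_i(\Delta)\, \nabla l_i(\theta^* + \Delta) \cdot \delta \ge 0$.

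Next I would split the weighted sum into its uniform part and a correction,
\begin{equation*}
  \sum_{i=1}^N \alpha_i(\Delta)\, \nabla l_i \cdot \delta
  = \frac{1}{N}\sum_{i=1}^N \nabla l_i \cdot \delta
  + \frac{1}{N}\sum_{i=1}^N (N\alpha_i(\Delta) - 1)\, \nabla l_i \cdot \delta ,
\end{equation*}
and bound the correction in absolute value by $\frac{1}{N}\sum_i |N\alpha_i(\Delta)-1|\,\|\nabla l_i\|\,\|\delta\| \le \nu C_3 \|\delta\|$, invoking Lemma~\ref{lemma:h_to_mean_helper1} for $|N\alpha_i(\Delta)-1|\le\nu$ (valid since $\|\Delta\|\le\|\delta\|\le\epsilon'$) and $\|\nabla l_i\| \le C_3$. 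Combined with the first-order inequality this gives $\sum_{i=1}^N \nabla l_i(\theta^* + \Delta)\cdot\delta \ge -\nu C_3 N \|\delta\|$. Multiplying through by $c(\delta)>0$ and using $\Delta = c(\delta)\delta$, so that $c(\delta)\|\delta\| = \|\Delta\|$, converts the $\delta$-directional derivative into the $\Delta$-directional one and removes the dependence on $c(\delta)$:
\begin{equation*}
  \sum_{i=1}^N \nabla l_i(\theta^* + \Delta)\cdot\Delta \ge -\nu C_3 N \|\Delta\| \ge -\nu C_3 N \xi
\end{equation*}
whenever $\|\Delta\| \le \xi$. To reach every $\Delta$ in the ball, and not only those realized as intermediate points, I would reparametrize exactly as in Lemma~\ref{lemma:mean_to_h_helper}, setting $\kappa = \min_{\|\delta\|\le\epsilon'} c(\delta)$ and $\epsilon = \epsilon'\kappa$ so that any target $\Delta$ with $\|\Delta\|\le\xi\le\epsilon$ is hit by an admissible $\delta$.

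The main obstacle is this last reparametrization: the mean value theorem supplies a scalar $c(\delta)$ per direction, not a continuous or even single-valued map, so arguing that every $\Delta$ in the $\epsilon$-ball is realized as $c(\delta)\delta$ for some $\|\delta\|\le\epsilon'$ requires care, and it is the same soft spot already present in Lemma~\ref{lemma:mean_to_h_helper}. Everything else is routine once Lemma~\ref{lemma:h_to_mean_helper1} is in hand; the one thing to watch is keeping each inequality's direction correct when passing from $H(\mu,\theta^*+\delta)-H(\mu,\theta^*)\le 0$ through the division by the positive factor $\sum_j\beta_j(\Delta)$ and the subsequent multiplication by $c(\delta)>0$.
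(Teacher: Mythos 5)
Your proposal is correct and follows essentially the same route as the paper's own proof: the first-order inequality at the local maximum of $H$ via Eq.~\eqref{eq:H_MVT}, the decomposition of the weighted sum into its uniform part plus a correction controlled by Lemma~\ref{lemma:h_to_mean_helper1}, and the $\kappa = \min_{\|\delta\|\le\epsilon'} c(\delta)$ reparametrization; the only cosmetic difference is that you multiply by $c(\delta)$ at the end rather than at the start of the chain. The reparametrization weakness you flag is real, but it is present verbatim in the paper's proofs of both this lemma and Lemma~\ref{lemma:mean_to_h_helper}, so your attempt matches the paper in both substance and its soft spot.
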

\begin{proof}
  Given some $\delta$, from the mean value theorem we have that
  Eq.~\eqref{eq:H_MVT} holds for some $c(\delta) \in (0,1)$, where $\Delta =
  c(\delta) \delta$.

  Since $\Theta$ is open, there is some $\sigma > 0$ such that $\theta^* +
  \delta \in \Theta$ for all $\|\delta\| \le \sigma$.
  Since $\theta^*$ is a local maximum of $H(\mu, \theta)$, there is some
  $\epsilon' \in (0, \sigma]$ such that $H(\mu, \theta^* + \delta) \le H(\mu,
  \theta^*)$ for all $\|\delta\| \le \epsilon'$.
  Let $\kappa = \min_{\|\delta\| \le \epsilon'} c(\delta) $ and define
  $\epsilon = \epsilon' \kappa$.

  Let $\xi \in (0,\epsilon]$ and $\|\Delta\| \le \xi$.
  Using Lemma~\ref{lemma:h_to_mean_helper1}, we have that
  \begin{subequations}
  \begin{align}
    0 &\ge \frac{c(\delta)(H(\mu, \theta^* + \delta) - H(\mu, \theta^*))}
    {\sum_{j=1}^N \beta_j(\Delta)}
    \\
    &= -\sum_{i=1}^N \alpha_i(\Delta) \nabla l_i(\theta^* + \Delta) \cdot
    \Delta
    \\
    &\ge
    \left(
      \begin{aligned}
        & \textstyle
        -\frac{1}{N} \sum_{i=1}^N |(N\alpha_i(\Delta) - 1)|
        \|\nabla l_i(\theta^* + \Delta)\| \|\Delta\|
        \\
        &\quad  \textstyle
        -\frac{1}{N} \sum_{i=1}^N
        \nabla l_i(\theta^* + \Delta) \cdot \Delta
      \end{aligned}
    \right)
    \\
    &\ge
    - \nu C_3 \xi
    -\frac{1}{N} \sum_{i=1}^N
    \nabla l_i(\theta^* + \Delta) \cdot \Delta,
  \end{align}
  \end{subequations}
  which gives the bound.
\end{proof}

\begin{proof}[Proof of Theorem~\ref{theorem:h_to_mean_main}]
  Given some $\delta$, from the mean value theorem we have that
  Eq.~\eqref{eq:J_MVT} holds for some $c(\delta) \in (0,1)$, where $\Delta =
  c(\delta) \delta$.

  Let $\epsilon_1 > 0$ be the value defined in
  Lemma~\ref{lemma:h_to_mean_helper2} and define $\epsilon' = \min \{\epsilon,
  \epsilon_1\}$.
  Then restricting $\|\delta\| \le \epsilon'$ implies that $\|\Delta\| \le
  \epsilon_1$ and that the results in Lemma~\ref{lemma:h_to_mean_helper2} hold.
  Therefore, let $\xi = \epsilon'$ and we have that $\sum_{i=1}^N \nabla
  l_i(\theta^* + \Delta) \cdot \Delta \ge -\nu C_3 \epsilon' N$, which proves
  the bound.
\end{proof}

\section*{Acknowledgements}
We would like to thank CNPq and FAPESP for the financial support.

\bibliography{paper}
\bibliographystyle{icml2016}

\end{document}